\definecolor{darkblue}{rgb}{0,0.08,0.8}
\newtheorem{theorem}{Theorem}[section]
\newtheorem*{theorem*}{Theorem}
\newtheorem{lemma}[theorem]{Lemma}
\newtheorem{proposition}[theorem]{Proposition}
\newtheorem*{proposition*}{Proposition}
\renewcommand{\hat}{\widehat}
\newcommand{\E}{\mathbb{E}}
\newcommand{\norm}[1]{\left\lVert#1\right\rVert}
\newcolumntype{C}{>{$}c<{$}}
\newcolumntype{L}{>{$}l<{$}}
\begin{document}
\title{Model Imitation for Model-Based Reinforcement Learning}
\author{Yueh-Hua Wu$^{1,2}$\thanks{Equal contributions.}, Ting-Han Fan$^{3}$\footnotemark[1], Peter J. Ramadge$^{3}$, Hao Su$^{2}$}
\date{
$^1$ National Taiwan University\\ 
\vspace{0.03in}
$^2$ University of California San Diego\\
\vspace{0.05in}
$^3$ Princeton University
}
\maketitle

\begin{abstract}
Model-based reinforcement learning (MBRL) aims to learn a dynamic model to reduce the number of interactions with real-world environments. However, due to estimation error, rollouts in the learned model, especially those of long horizons, fail to match the ones in real-world environments. This mismatching has seriously impacted the sample complexity of MBRL. The phenomenon can be attributed to the fact that previous works employ supervised learning to learn the one-step transition models, which has inherent difficulty ensuring the matching of distributions from multi-step rollouts. Based on the claim, we propose to learn the transition model by matching the distributions of multi-step rollouts sampled from the transition model and the real ones via WGAN. We theoretically show that matching the two can minimize the difference of cumulative rewards between the real transition and the learned one. Our experiments also show that the proposed Model Imitation method can compete or outperform the state-of-the-art in terms of sample complexity and average return.
\end{abstract}

\section{Introduction}
Reinforcement learning (RL) is of great interest because many real-world problems 
can be modeled as sequential decision-making problems. When interactions with the environment are cheap, model-free reinforcement learning (MFRL) has been favored for both its simplicity and ability to learn complex tasks.
However, in most practical problems (e.g., autonomous driving), these interactions are extremely costly and this can make MFRL infeasible. 
An additional critique of MFRL is that it does not fully exploit past queries of the environment. This motivates the consideration of model-based reinforcement learning (MBRL). Along with learning an agent policy, MBRL learns a model of the dynamics of the environment with which the agent interacts.
If the model is sufficiently accurate, the agent can learn a desired policy by interacting 
with a simulation of the model. 
Hence by using samples from both the environment and from the model,
MBRL can reduce the number of environment samples needed to learn an acceptable policy.

Most previous work on MBRL employs supervised learning with an $\ell_2$-based error metric \citep{luo2018slbo,kurutach18metrpo,clavera2018mbmpo}, or maximum likelihood \citep{janner2019trust}, to learn a model. This is a nontrivial task in itself. Moreover, using the model to learn a policy incurs other challenges. Since the estimation error of the learned model accumulates as the trajectory grows, it is hard to train a policy on a long synthetic trajectory. Conversely, training on short trajectories makes the policy short-sighted. This issue is known as the planning horizon dilemma \citep{langlois2019benchmarking}. As a result, although based on a valuable insight, MBRL still has to be designed meticulously.

Intuitively, we would like to learn a transition model in a way that it can reproduce the trajectories that have been generated from the real dynamcis. However, supervised learning with one-step loss, as adopted by many MBRL methods, does not guarantee the resemblance of trajectories. 
Some previous work has proposed multi-step training \citep{luo2018slbo,asadi2019combating,talvitie2017self}. Still, experiments show that model learning fails to benefit much from the multi-step loss. We attribute this to the essence of supervised learning, which preserves the similarity of transitions only over a few steps, so the long-term objective of similarity between real and synthetic trajectories cannot be guaranteed. As we'll discuss in section~\ref{sec:matching}, supervised learning has quadratic error w.r.t. the trajectory length while our proposed approach incurs linear error.

We propose to learn a transition model via distribution matching. Specifically, we use a WGAN \citep{wgan} to match the distributions of state-action-next-state triple $(s,a,s')$ 
from the environment and the learned model so that the agent policy 
generates similar trajectories when interacting with either the environment or the learned model. Figure~\ref{fig:illustrate} illustrates the difference between methods based on supervised learning and distribution matching.
In detail, we sample trajectories from the real transition according to a policy. 
To learn the real transition $T$, 
we also sample synthetic trajectories from our learned model $T'$ with the same policy. The learned model serves as the generator in the WGAN framework and there is a critic that discriminates the two sources of trajectories. We update the generator and the critic alternatively until the synthetic trajectories cannot be distinguished from the real trajectories. This gives better theoretical guarantees than traditional supervised learning methods.
Compared to the ensemble methods in previous work, the proposed method is capable of generalizing to unseen transitions using only \emph{one} model of the dynamics. 
By contrast, using multiple models in an ensemble approach does not resolve the problem that one-step (or few-step) supervised learning fails to accurately imitate the distribution of rollouts. 

Our contributions are summarized below:
\begin{itemize}
\setlength\itemsep{0pt}

    \item We propose an MBRL method called model imitation (MI), in which the learned transition model generates similar rollouts to the environment, resulting in a more accurate policy gradient;
    \item We show that the learned transition is consistent: $T'\rightarrow T$ and the difference in cumulative rewards is linearly bounded by the training error of the WGAN;
    \item To stabilize model learning, we analyze our sampling technique and investigate training across WGANs;
    \item We experimentally show that MI is more sample efficient than state-of-the-art MBRL and MFRL methods.
\end{itemize}

\begin{figure}
    \centering
    \includegraphics[scale=0.4]{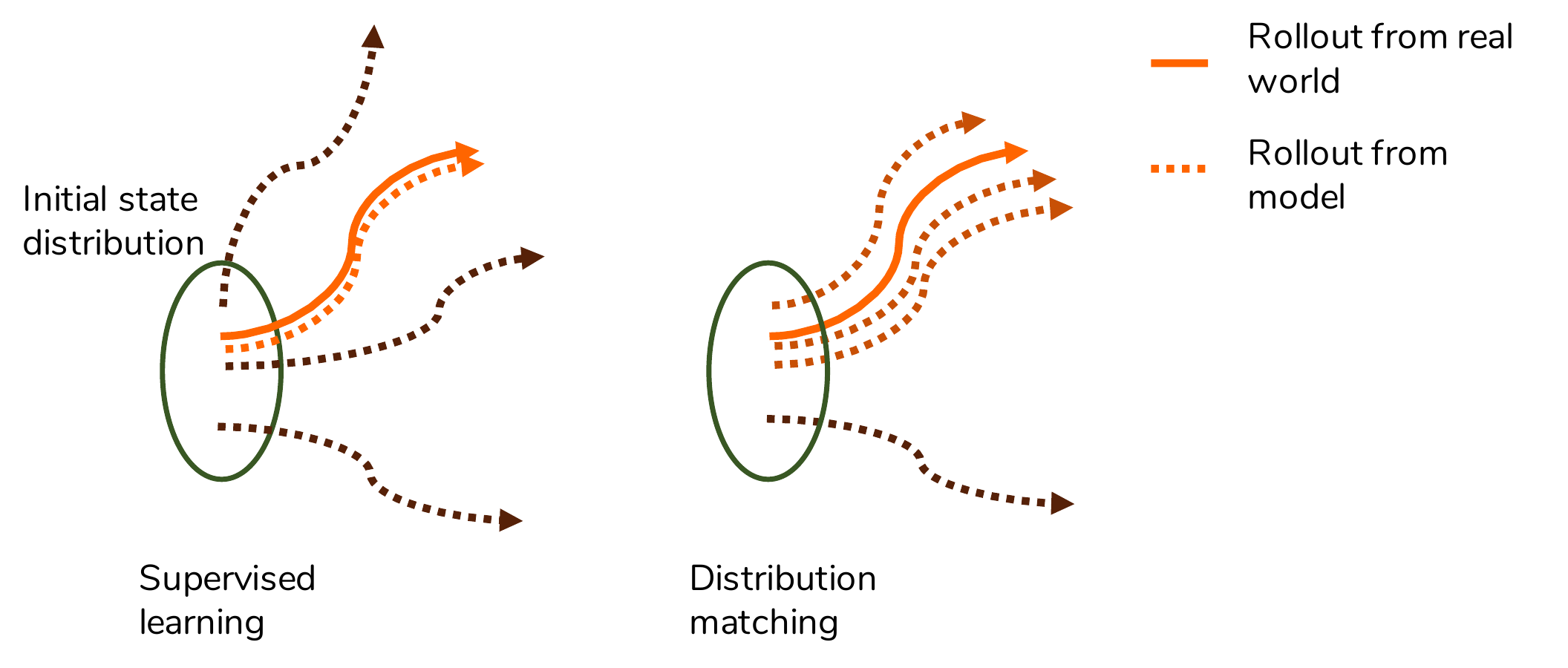}
    \caption{Distribution matching enables the learned transition to generate similar rollouts to the real ones even when the policy is stochastic or the initial states are close. In contrast, training with supervised learning does not ensure rollout similarity and the resulting policy gradient may be inaccurate. This figure considers a fixed policy sampling in the real transition and a transition model.}
    \label{fig:illustrate}
\end{figure}

\section{Related work}\label{relatedwork}
In this section, we introduce our motivation inspired by learning from demonstration (LfD) \citep{schaal1997learning} and give a brief survey of MBRL methods.
\subsection{Learning from Demonstration}
A straightforward approach to LfD is to leverage behavior cloning (BC), which reduces LfD to a supervised learning problem. Even though learning a policy via BC is time-efficient, it cannot imitate a policy without sufficient demonstration because the error may accumulate without the guidance of an expert \citep{ross2011dagger}. 
Generative Adversarial Imitation Learning (GAIL) \citep{ho2016generative} is another state-of-the-art LfD method that learns an optimal policy by utilizing generative adversarial training to match occupancy measures \citep{syed2008apprenticeship}.
GAIL learns an optimal policy by matching the distribution of the trajectories generated from an agent policy with the distribution of the given demonstration. 
Moreover, the two distributions match if and only if the agent has learned the optimal policy \cite{ho2016generative}. One advantage of GAIL is that it only requires a small amount of demonstration data to obtain an optimal policy. However, it requires a considerable number of interactions with 
the environment for the generative adversarial training to converge.

Our intuition is that transition learning (TL) is similar to learning from demonstration (LfD) by exchanging the roles of transition and policy.
In LfD, trajectories sampled from the real transition are given, and the goal is to learn a policy. While in TL, trajectories sampled from a fixed policy are given, and we want to learn the underlying transition of the environment. Thus from LfD to TL, we interchange the roles of the policy and the transition. It is therefore tempting to study the counterpart of GAIL in TL; i.e., learning the real transition by distribution matching. By doing so, we retain the advantages of GAIL, while its disadvantages are less critical because sampling from the learned model is much cheaper than sampling from the environment.
That GAIL learns a better policy than BC suggests that distribution matching has the potential to learn a better model of transition generation than supervised learning.

\subsection{Model-Based Reinforcement Learning}
For deterministic transitions, an $\ell_2$-based error is usually utilized to learn a transition model. 
In \citet{Nagabandi18}, an approach that uses supervised learning with a mean-squared error objective is shown to perform well under fine-tuning. 
To alleviate model bias, some previous work adopt ensembles \citep{kurutach18metrpo,jacob2018steve}, where multiple transition models with different initialization are trained at the same time. In a slightly more complicated manner, \citet{clavera2018mbmpo} utilizes meta-learning to gather information from multiple models. Lastly, SLBO \citep{luo2018slbo} is the first algorithm developed from solid theoretical properties for model-based deep RL via a joint model-policy optimization framework. 

For the stochastic transitions, maximum likelihood estimation 
or moment matching are natural ways to learn a transition model. 
This is usually modeled by a Gaussian distribution. Following this idea, 
Gaussian process models with and without model predictive control  \citep{Kamthe2017gauss,Kupcsik2013gauss,Deisenroth2015gauss} 
are studied as an uncertainty-aware version of MBRL. 
Similar to the deterministic case, to mitigate model bias and foster stability, an ensemble method for probabilistic networks \citep{kurtland2018pets} is also studied. 
An important distinction between training a deterministic transition model and a stochastic one is that the stochastic version can model noise from the real transition itself or from the insufficiency of data. This is another motivation given by \citet{kurtland2018pets}.

\section{Background}
\subsection{Reinforcement Learning}
We consider the standard Markov Decision Process (MDP) \citep{sutton1998introduction}. MDP is represented by a tuple $\langle S,\mathcal{A},T, r,\gamma\rangle$, where $\mathcal{S}$ is the state space, $\mathcal{A}$ is the action space, $T(s_{t+1}\vert s_t,a_t)$ is the transition density of state $s_{t+1}$ at time step $t+1$ given action $a_t$ made under state $s_t$, $r(s,a)$ is the reward function, and $\gamma\in(0,1)$ is the discount factor.

A stochastic policy $\pi(a\vert s)$ is a density of action~$a$ given state~$s$. Let the initial state distribution be $\alpha$. 
In the infinite horizon setting, the performance $R(\alpha,\pi,T)$ of the triple $(\alpha,\pi,T)$ is the expected cumulative $\gamma$-discounted reward. Equivalently, $R(\alpha,\pi,T)$ is the expected cumulative reward in a length-$H$ trajectory $\{s_t,a_t\}_{t=0}^{H-1}$ generated by $(\alpha,\pi,T)$ with $H\sim \text{Geometric}(1-\gamma)$. 
\begin{align}
\begin{split}\label{eq:cumulative_reward}
    R(\alpha,\pi,T)
    &=\textstyle \mathbb{E}\left[\sum_{t=0}^\infty \gamma^tr(s_t,a_t)\Big\lvert \alpha, \pi,T\right]\\
    &=\textstyle \mathbb{E}\left[\sum_{t=0}^{H-1} r(s_t,a_t)\Big\lvert \alpha, \pi,T\right].
\end{split}
\end{align}
When $\alpha$ and $T$ are fixed, $R(\cdot)$ depends solely on $\pi$, and reinforcement learning ~\citep{sutton1998introduction} aim to find a policy $\pi$ to maximize $R(\pi)$.

\subsection{Occupancy Measure}
Given initial state distribution $\alpha(s)$, policy $\pi(a|s)$ and transition $T(s'|s,a)$, the normalized occupancy measure $\rho_T^{\alpha,\pi}(s,a)$ 
generated by $(\alpha,\pi,T)$ is defined as
\begin{align}
\begin{split}
    \rho_T^{\alpha,\pi}(s,a) 
    &= \textstyle \sum\limits_{t=0}^{\infty} (1-\gamma)\gamma^t \mathbb{P}(s_t=s,a_t=a|\alpha,\pi,T)\\ 
    &\textstyle =(1-\gamma)\sum\limits_{t=0}^{H-1}  \mathbb{P}(s_t=s,a_t=a|\alpha,\pi,T),
    \label{eq:occupancy}
\end{split}
\end{align}
where $\mathbb{P}(\cdot)$ is the probability measure (a density function if $\mathcal{S}$ or $\mathcal{A}$ is continuous). Intuitively, $\rho_T^{\alpha,\pi}(s,a)$ is the distribution of $(s,a)$ in a length-$H$ trajectory $\{s_t,a_t\}_{t=0}^{H-1}$ with $H\sim \text{Geometric}(1-\gamma)$ following the laws of $(\alpha,\pi,T)$. From \cite{syed2008apprenticeship}, the relation between $\rho_T^{\alpha,\pi}$ and $(\alpha,\pi,T)$ is characterized by the Bellman flow constraint. Specifically, $x=\rho_T^{\alpha,\pi}$ as defined in Eq.~(\ref{eq:occupancy}) is the unique solution to:
\begin{align}
\begin{split} \label{eq:bellman}
	\textstyle x(s,a)
	&=\pi(a|s)\Big[(1-\gamma)\alpha(s)  \\
	&\qquad + \gamma \int x(s',a')T(s|s',a')ds'da'\Big],\\
	\textstyle x(s,a)
	&\geq 0.
\end{split}	
\end{align}
In addition, \citet[Theorem 2]{syed2008apprenticeship} shows that $\pi(a|s)$ and $\rho_T^{\alpha,\pi}(s,a)$ have a one-to-one correspondence with $\alpha(s)$ and $T(s'|s,a)$ fixed; i.e., $\pi(a|s)\triangleq\frac{\rho(s,a)}{\int \rho(s,a)da}$ is the only policy whose occupancy measure is $\rho$.

With the occupancy measure, the cumulative reward Eq.~(\ref{eq:cumulative_reward}) can be represented as 
\begin{equation}
    R(\alpha,\pi,T)=\E_{(s,a)\sim \rho^{\alpha,\pi}_T}[r(s,a)]/(1-\gamma).
    \label{eq:occu_cum_reward}
\end{equation}
The goal of maximizing the cumulative reward can then be achieved by adjusting $\rho^{\alpha,\pi}_T$. This motivates us to adopt distribution matching approaches like WGAN \citep{wgan} to learn a transition model.

\section{Theoretical Analysis for WGAN}\label{sec:matching}
In this section, we present a consistency result and error bounds for WGAN \citep{wgan}. In the setting of MBRL, the training objective for WGAN is
\begin{align}
\begin{split}
    \underset{T'}{\min}~\underset{\norm{f}_{\text{Lip}}\leq 1}{\max}~&
    \E_{(s,a)\sim \rho_T^{\alpha,\pi},~s'\sim T(\cdot|s,a)}[f(s,a,s')]-\\
    &\E_{(s,a)\sim \rho_{T'}^{\alpha,\pi},~s'\sim T'(\cdot|s,a)}[f(s,a,s')].
\label{eq:wgan}
\end{split}
\end{align}
By Kantorovich-Rubinstein duality \citep{Villani2008_opt_transport}, the optimal value of the inner maximization is exactly the 1-Wasserstein distance $W_1(p(s,a,s')\lvert\rvert p'(s,a,s'))$ where $p(s,a,s')=\rho_T^{\alpha,\pi}(s,a)T(s'|s,a)$ is the $\gamma$-discounted distribution of $(s,a,s')$. Thus, by minimizing over the choice of $T'$, we are finding $p'$ that minimizes $W_1(p(s,a,s')\lvert\rvert p'(s,a,s'))$, which gives the consistency.

\begin{proposition}[Consistency for WGAN]\label{prop:consistent-wgan} 
Let $T$ and $T'$ be the real and 
learned transitions respectively. 
If WGAN is trained to optimality, then
 $$T(s'|s,a)=T'(s'|s,a),~\forall (s,a)\in\text{Supp}(\rho_{T}^{\alpha,\pi}),$$
where $\text{Supp}(\rho_{T}^{\alpha,\pi})$ is the support of $\rho_{T}^{\alpha,\pi}$.
\end{proposition}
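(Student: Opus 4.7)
The plan is to exploit the fact that at optimality the WGAN objective drives the 1-Wasserstein distance between two explicitly parametrized joint distributions to zero, and then to recover the conditional statement about $T$ and $T'$ by marginalizing.

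First, I would identify the two joint distributions being compared. Using the Kantorovich--Rubinstein duality already invoked in the text, the inner maximum in Eq.~(\ref{eq:wgan}) equals $W_1(p \,\lvert\rvert\, p')$ where $p(s,a,s')=\rho_T^{\alpha,\pi}(s,a)\,T(s'|s,a)$ and $p'(s,a,s')=\rho_{T'}^{\alpha,\pi}(s,a)\,T'(s'|s,a)$. Because $W_1$ is a metric and takes value $0$ only when its two arguments are equal, and because the choice $T'=T$ is feasible and achieves objective value $0$, optimality of the outer minimization forces $p=p'$ as probability measures on $\mathcal{S}\times\mathcal{A}\times\mathcal{S}$.

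Next, I would reduce this joint equality to the desired conditional equality in two steps. Marginalizing out $s'$ yields $\rho_T^{\alpha,\pi}(s,a)=\rho_{T'}^{\alpha,\pi}(s,a)$ for (almost) all $(s,a)$, so the occupancy measures coincide. Then, on the support of $\rho_T^{\alpha,\pi}$, this common marginal is strictly positive, so dividing the joint equality $\rho_T^{\alpha,\pi}(s,a)T(s'|s,a)=\rho_{T'}^{\alpha,\pi}(s,a)T'(s'|s,a)$ by the shared marginal gives $T(s'|s,a)=T'(s'|s,a)$ for all such $(s,a)$ and (almost) every $s'$, as claimed.

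The main obstacle I anticipate is not algebraic but conceptual and measure-theoretic: verifying that the argument is self-consistent given that $\rho_{T'}^{\alpha,\pi}$ itself depends on $T'$. One must argue that the infimum over the set $\{p':T'\text{ a valid transition}\}$ really is $0$, and that the argument does not require any absolute continuity assumption beyond what follows from positivity on $\text{Supp}(\rho_T^{\alpha,\pi})$; this can be handled by appealing to the Bellman-flow characterization in Eq.~(\ref{eq:bellman}), which guarantees that setting $T'=T$ produces $\rho_{T'}^{\alpha,\pi}=\rho_{T}^{\alpha,\pi}$ and hence $p'=p$. A minor secondary point is stating the conclusion with the correct ``almost everywhere'' qualifier, since equality of densities from equality of measures only holds outside a null set, which is the reason the proposition is restricted to the support in the first place.
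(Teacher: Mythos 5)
Your proposal is correct and follows the same skeleton as the paper's proof: optimality forces the 1-Wasserstein distance to vanish, hence $p=p'$ as joint distributions over $(s,a,s')$, and then the conditional $T=T'$ is recovered by dividing out the common $(s,a)$-marginal on its support. The one step you handle differently is how the equality of occupancy measures is obtained: you simply integrate the identity $\rho_T^{\alpha,\pi}(s,a)T(s'|s,a)=\rho_{T'}^{\alpha,\pi}(s,a)T'(s'|s,a)$ over $s'$, using $\int T(s'|s,a)\,ds'=1$, whereas the paper reconstructs $\rho_{T'}^{\alpha,\pi}$ from the Bellman flow constraint of Eq.~(\ref{eq:bellman}) and substitutes $p=p'$ inside the flow integral. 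Your route is shorter and equally valid; the paper's route has the side benefit of making explicit that $\rho_{T'}^{\alpha,\pi}$ is the unique Bellman-flow solution for $T'$, but that fact is not needed once $p=p'$ is in hand. You also add two points the paper leaves implicit, both to your credit: the observation that the infimum of the outer minimization is attained at value zero because $T'=T$ is feasible (so ``trained to optimality'' really does entail $W_1(p\,\lvert\rvert\,p')=0$ rather than merely a minimum), and the almost-everywhere qualifier on the final equality of densities. No gaps.
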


\begin{proof}
Because the loss function of WGAN is the 1-Wasserstein distance, we know $p(s,a,s')=p'(s,a,s')~\text{a.e.}$ at optimality. 
The Bellman equation (\ref{eq:bellman}) implies
	\begin{align*}
	&\rho_{T'}^{\alpha,\pi}(s,a) =\\
    	&\pi(a|s)\Big[(1-\gamma)\alpha(s)
		 + \gamma \int_{s',a'} \rho_{T'}^{\alpha,\pi}(s',a')T'(s|s',a')\Big]\\ 
	&= \pi(a|s)\Big[(1-\gamma)\alpha(s) + \gamma \int_{s'a'} p'(s',a',s)\Big]\\
	&\overset{p=p'}{=} \pi(a|s)\Big[(1-\gamma)\alpha(s) 
		+ \gamma \int_{s'a'} p(s',a',s)\Big]\\
	&=\rho_{T}^{\alpha,\pi}(s,a).
	\end{align*}
	Recall $p(s,a,s')\triangleq\rho_{T}^{\alpha,\pi}(s,a)T(s'|s,a)$ and 
	$p'(s,a,s')\triangleq\rho_{T'}^{\alpha,\pi}(s,a)T'(s'|s,a).$ 
	Hence, that WGAN is optimal implies $T(s'|s,a)=T'(s'|s,a),~\forall (s,a)\in\text{Supp}(\rho_{T}^{\alpha,\pi}).$
\end{proof}
The support constraint is inevitable because the training data is sampled from $\rho_T^{\alpha,\pi}$ and guaranteeing anything beyond it is difficult. In our experiments, the support constraint is not an issue. The performance rises in the beginning, suggesting that $\text{Supp}(\rho_{T}^{\alpha,\pi})$ is initially large enough.

Now that training with WGAN gives a consistent estimate of the real transition, it is justifiable to use it to learn a transition model. Note, however, that the consistency result only discusses the optimal case. The next step is to analyze the non-optimal situation and observe how the cumulative reward deviates w.r.t. the training error.

\begin{theorem}[Error Bound for WGAN]
	Let $\rho_T^{\alpha,\pi}(s,a),~\rho_{T'}^{\alpha,\pi}(s,a)$ be the normalized occupancy measures generated by the real and learned transitions $T$, $T'$, respectively. 
	If the reward function is $L_r$-Lipschitz and the training error of WGAN is 
	$\epsilon$, then
	$|R(\pi,T)- R(\pi,T')| \leq \epsilon L_r/(1-\gamma)$.
	\label{thm:err-wgan}
\end{theorem}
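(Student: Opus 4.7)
The plan is to convert the reward difference into an expectation gap and then invoke Kantorovich--Rubinstein duality, exploiting that WGAN's training loss \emph{is} a 1-Wasserstein distance.

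First I would use the occupancy-measure representation of the cumulative reward in Eq.~(\ref{eq:occu_cum_reward}) to write
\begin{align*}
|R(\alpha,\pi,T)-R(\alpha,\pi,T')|
&=\tfrac{1}{1-\gamma}\bigl|\E_{(s,a)\sim\rho_T^{\alpha,\pi}}[r(s,a)]-\E_{(s,a)\sim\rho_{T'}^{\alpha,\pi}}[r(s,a)]\bigr|.
\end{align*}
Next I would lift $r$ to a function on the extended space $(s,a,s')$ by defining $\tilde r(s,a,s')=r(s,a)$, which is still $L_r$-Lipschitz (it is independent of $s'$). With $p(s,a,s')=\rho_T^{\alpha,\pi}(s,a)T(s'|s,a)$ and $p'(s,a,s')=\rho_{T'}^{\alpha,\pi}(s,a)T'(s'|s,a)$ as in the WGAN objective, the two expectations can be rewritten as $\E_{p}[\tilde r]$ and $\E_{p'}[\tilde r]$.

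Now the main tool is Kantorovich--Rubinstein duality: for any $L_r$-Lipschitz function $g$,
\begin{equation*}
|\E_p[g]-\E_{p'}[g]|\leq L_r\,W_1(p\,\|\,p').
\end{equation*}
Applying this to $g=\tilde r$ and using the hypothesis that the WGAN training error is $\epsilon$, i.e., $W_1(p\,\|\,p')\leq\epsilon$ (this is exactly the inner maximum in Eq.~(\ref{eq:wgan}) by the duality already invoked above Proposition~\ref{prop:consistent-wgan}), yields $|\E_p[\tilde r]-\E_{p'}[\tilde r]|\leq L_r\epsilon$. Dividing by $1-\gamma$ gives the claim.

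The only real subtlety is bookkeeping: one must be careful that the reward, which depends only on $(s,a)$, is compared under the \emph{joint} distribution $p(s,a,s')$ on which the WGAN error $\epsilon$ is measured, rather than under the $(s,a)$-marginal. Viewing $r$ as a function of $(s,a,s')$ that ignores $s'$ handles this cleanly and preserves the Lipschitz constant; alternatively one could note that projecting out $s'$ can only decrease the $W_1$ distance, so the marginal Wasserstein distance is also bounded by $\epsilon$. Either route finishes the argument without any nontrivial estimate beyond the duality.
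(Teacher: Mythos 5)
Your proposal is correct and follows essentially the same route as the paper: both reduce the reward gap to an expectation difference under the occupancy measures, rescale by $L_r$, and invoke Kantorovich--Rubinstein duality together with the fact that the WGAN training error is the $W_1$ distance between the joint distributions $p$ and $p'$. Your lifting of $r$ to $\tilde r(s,a,s')=r(s,a)$ and the paper's observation that the marginal $W_1$ is bounded by the joint $W_1$ are two equivalent ways of handling the same bookkeeping point, which you already note yourself.
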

\begin{proof}
	Observe that the occupancy measure $\rho_T^{\alpha,\pi}(s,a)$ is a marginal distribution of $p(s,a,s')=\rho_T^{\alpha,\pi}(s,a)T(s'|s,a)$. Because the distance between the marginal is upper bounded by that of the joint, we have
	\begin{align*}
	 W_1(\rho_T^{\alpha,\pi}(s,a)\lvert\rvert \rho_{T'}^{\alpha,\pi}(s,a))  &\leq W_1(p(s,a,s')\lvert\rvert p'(s,a,s'))\\&= \epsilon,
	\end{align*}
	where $W_1$ is the 1-Wasserstein distance. Then, the cumulative reward is bounded by
	\begin{equation*}
	\begin{split}
	&R(\pi,T)
	=\frac{1}{1-\gamma}\int_{s,a} r(s,a)\rho_T^{\alpha,\pi}(s,a)\\
	=&R(\pi,T')+\frac{1}{1-\gamma}\int_{s,a} r(s,a)\big(\rho_T^{\alpha,\pi}-\rho_{T'}^{\alpha,\pi}\big)(s,a)\\
	=&R(\pi,T')+\frac{L_r}{1-\gamma}\int_{s,a} \frac{r(s,a)}{L_r}\big(\rho_T^{\alpha,\pi}-\rho_{T'}^{\alpha,\pi}\big)(s,a)
\end{split}
\end{equation*}
\begin{equation*}
\begin{split}
	\leq& R(\pi,T')+\frac{L_r}{1-\gamma}\underset{\norm{f}_{\text{Lip}}\leq 1}{\sup}\int_{s,a} f(s,a)\big(\rho_T^{\alpha,\pi}-\rho_{T'}^{\alpha,\pi}\big)(s,a)\\
	=&R(\pi,T')+\frac{L_r}{1-\gamma}\underset{\norm{f}_{\text{Lip}}\leq 1}{\sup}\E_{ \rho_T^{\pi,\alpha}}[f(s,a)]-\E_{ \rho_{T'}^{\pi,\alpha}}[f(s,a)]\\
	=&R(\pi,T')+\frac{L_r}{1-\gamma}W_1(\rho_T^{\pi,\alpha}\lvert\rvert\rho_{T'}^{\pi,\alpha})\\
	\leq &R(\pi,T')+\epsilon\frac{L_r}{1-\gamma}.
	\end{split}
	\end{equation*}
The first inequality holds because $r(s,a)/L_r$ is 1-Lipschitz and the last equality follows from Kantorovich-Rubinstein duality \cite{Villani2008_opt_transport}. Since $W_1$ distance is symmetric, the same conclusion holds if we interchange $T$ and $T'$. Hence
	$$|R(\pi,T)-R(\pi,T')|\leq \epsilon L_r/(1-\gamma).$$
\end{proof}
Theorem~\ref{thm:err-wgan} indicates that if WGAN is trained properly; i.e., $\epsilon$ is small, the cumulative reward on a synthetic trajectory will be close to that on 
the real trajectory. As MBRL aims to train a policy on synthetic trajectories, 
the accuracy of the cumulative reward over synthetic trajectories is the \emph{bottleneck}. Theorem~\ref{thm:err-wgan} implies that WGAN's error is linear to the (expected) length of the trajectory $(1-\gamma)^{-1}$. This is a sharp contrast to the error bounds in most RL literature, where the dependency on the trajectory length is usually quadratic \citep{syed2010,ross2011dagger}. 
This is due to the error propagation: if an error $\epsilon$ is made at each step, then the total error is $\epsilon+2\epsilon+...+H\epsilon=O(\epsilon H^2)$. Since WGAN gives us a better estimation of the cumulative reward in the learned model, the policy update becomes more accurate.

\section{Model Imitation for Model-Based Reinforcement Learning}
\label{sec:model-imitation}
In this section, we present a practical MBRL method called model imitation (MI) that incorporates the transition learning mentioned in Section~\ref{sec:matching}.

\subsection{Sampling Technique for Transition Learning}
It is hard to train the WGAN using a long synthetic trajectory, since the trajectory gradually digresses from the real behavior over time. To tackle this issue, we use the learned transition $T'$ to synthesize $N$ short trajectories with initial states sampled from the real trajectory.

To analyze this sampling technique, let $\beta<\gamma$ be the discount factor of the short trajectories so that the expected length is $\E[L]=(1-\beta)^{-1}$. To simplify the notation, let $\rho_{T'}^\beta$, $\hat{\rho}_T^\beta$, $\rho_T^\beta$, $\rho_T$ be the normalized occupancy measures of synthetic short trajectories, empirical real short trajectories, real short trajectories and real long trajectories. Notice both $\rho_{T'}^\beta$ and $\rho_{T}^\beta$ are generated from the same initial distribution $\rho_T$. The 1-Wasserstein distance is bounded by
$$
W_1(\rho_{T'}^\beta\lvert\rvert \rho_T)\leq W_1(\rho_{T'}^\beta\lvert\rvert \hat{\rho}_T^\beta) + W_1(\hat{\rho}_T^\beta\lvert\rvert \rho_T^\beta) + W_1(\rho_T^\beta \lvert\rvert \rho_T).
$$
$W_1(\rho_{T'}^\beta\lvert\rvert \hat{\rho}_T^\beta)$ is upper bounded by the training error of WGAN on short trajectories, which can be small empirically because the short ones are easier to imitate. By \cite{Canas2012wdist_bound}, $$W_1(\hat{\rho}_T^\beta\lvert\rvert \rho_T^\beta)=O(1/N^{1/d}),$$ where $d$ is the dimension of $(s,a)$. Also, by Lemma~2 and $W_1\leq D_{TV}\text{diam}(\mathcal{S\times A})$ \citep{dist_bounds}, we obtain $$W_1(\rho_T^\beta \lvert\rvert \rho_T)\leq \text{diam}(\mathcal{S\times A})\frac{(1-\gamma)\beta}{\gamma-\beta},$$ where $\text{diam}(\cdot)$ is set diameter. The second term encourages $N$ to be large while the third term wants $\beta$ to be small. Besides, $\beta$ cannot be too small otherwise there is little exploration to the future; in practice we may sample $N$ short trajectories to reduce the error from $W_1(\rho_{T'}\lvert\rvert \rho_T)$ to $W_1(\rho_{T'}^\beta\lvert\rvert \rho_T)$. Finally, since $\rho_{T'}^\beta$ is the occupancy measure we train on, from the proof of Theorem~\ref{thm:err-wgan} we deduce that 
$$|R(\pi,T)-R(\pi,T')|\leq W_1(\rho_{T'}^\beta\lvert\rvert \rho_T)L_r/(1-\gamma).$$ 
Thus, the WGAN does better under this sampling technique.

\subsection{Empirical Transition Learning}\label{sec:empirical}
To learn a transition model based on the occupancy measure matching mentioned in Section~\ref{sec:matching}, we align the distribution of the triples $(s,a,s')$ between the real and the learned transition models. This is inspired by how GAIL~\citep{ho2016generative} learns to align distributions over $(s,a)$ via solving an MDP with rewards extracted from a discriminator. 

To formulate a similar problem, recall from Eq.~(\ref{eq:wgan}), a 1-Lipschitz function $f(s,a,s')$, which we call the WGAN critic, is optimized in the inner maximization of the WGAN objective. Observe that $f$ is associated with the real transition $T$ while $-f$ is with the learned transition $T'$. To do the maximization, $f(s,a,s')$ tends to be positive if $(s,a,s')$ is drawn from $T$ and negative if drawn from $T'$. Thus, $f$ acts like a reward that signals the tendency of $(s,a,s')$ being generated from the real transition. Following this observation, consider an MDP $\langle\mathcal{S},\mathcal{A},T',f,\gamma\rangle$ with (psuedo) rewards being the WGAN critic $f(s,a,s')$. The overall goal of this MDP is to train $T'$ to maximize the cumulative pseudo reward under fixed policies. To see that this is equivalent to the WGAN objective, recall from \cite{wgan} that Eq.~(\ref{eq:wgan}) is typically trained by variants of gradient descent-ascent on $T'$ and $f$. However, the first expectation of Eq.~(\ref{eq:wgan}), $\E_{\pi,T}$, is independent of $T'$ under fixed $f$, so we may ignore it when optimizing $T'$. The second expectation $\E_{\pi,T'}$ is  the (normalized) cumulative pseudo reward
\begin{align}
\begin{split}
&\E_{(s,a)\sim \rho_{T'}^{\alpha,\pi},~s'\sim T'(\cdot|s,a)}[f(s,a,s')] \\
=&\E\Big[\sum_{t=0}^\infty \gamma^t f(s_t,a_t,s_{t+1})\Big| \alpha,\pi,T'\Big](1-\gamma).
\end{split}
\end{align}
This shows the equivalence between optimizing the cumulative pesudo reward and the WGAN objective. Later, we will introduce a transition learning version of PPO \citep{schulman2017proximal} to optimize the cumulative pseudo reward. 

Now consider the learning of WGAN critic $f(s,a,s').$ Since the policy and the learned model are updated alternately, we are required to train a sequence of $f$'s to account for the change of policies. However, the WGAN critic may be so strong that the generator (the learned transition) gets stuck at a local optimum \citep{zhang2018self}. The reason is that unlike a GAN that mimics the Jensen-Shannon divergence and hence its inner maximization is upper bounded by $\log(2)$, the WGAN mimics the Wasserstein distance and the inner maximization is unbounded from above. Such unboundedness implies the hardest instances dominate the attention of the generator, so the generator is likely to get stuck. As suggested in \citet{zhang2018self}, one should guide the generator to learn the easy task first and then gradually switch to the harder ones. A way to achieve it is to enforce boundness: introduce 
a parameter $\delta >0$ and cut-off the WGAN objective so that the inner maximization is upper bounded by $2\delta$:
\begin{align}
\begin{split}\label{eq:trun-wgan}
    \underset{T'}{\min}~\underset{\norm{f}_{\text{Lip}}\leq 1}{\max}~&
    \E_{\substack{(s,a)\sim \rho_T^{\alpha,\pi}\\s'\sim T(\cdot|s,a)}}[\min(\delta,f(s,a,s')]+\\
    &\E_{\substack{(s,a)\sim \rho_{T'}^{\alpha,\pi}\\s'\sim T'(\cdot|s,a)}}[\min(\delta,-f(s,a,s'))].
\end{split}
\end{align}
As $\delta\rightarrow\infty$, Eq.~(\ref{eq:trun-wgan}) recovers the WGAN objective Eq.~(\ref{eq:wgan}). To better comprehend Eq.~(\ref{eq:trun-wgan}), notice that it is equivalent to
\begin{equation}
\begin{split}
    \underset{T'}{\min}~\underset{\norm{f}_{\text{Lip}}\leq 1}{\max}~
    &\E_{\substack{(s,a)\sim \rho_T^{\alpha,\pi}\\s'\sim T(\cdot|s,a)}}[\min(0,f(s,a,s')-\delta)]+\\
    &\E_{\substack{(s,a)\sim \rho_{T'}^{\alpha,\pi}\\s'\sim T'(\cdot|s,a)}}[\min(0,-f(s,a,s')-\delta)],
    \end{split}
    \end{equation}
which is in turn equivalent to    
    \begin{equation}
    \begin{split}
    \underset{T'}{\min}~\underset{\norm{f}_{\text{Lip}}\leq 1}{\min}~
    &\E_{\substack{(s,a)\sim \rho_T^{\alpha,\pi}\\s'\sim T(\cdot|s,a)}}[\max(0,\delta-f(s,a,s'))]+\\
    &\E_{\substack{(s,a)\sim \rho_{T'}^{\alpha,\pi}\\s'\sim T'(\cdot|s,a)}}[\max(0,\delta+f(s,a,s'))].
\end{split}
\label{eq:hinge-wgan}
\end{equation}
This is a hinge loss version of the generative adversarial objective. We also empirically found that $\delta=1$ is enough for our purpose. This form of WGAN is first introduced in \cite{lim2017geometric}, where a consistency result is provided. According to \cite{lim2017geometric}, the inner minimization is interpreted as the soft-margin SVM. Consequently, it is margin-maximizing, which potentially enhances robustness. Further evaluation experiments are provided in \citet{zhang2018self,miyato2018spectral}.

\begin{algorithm}
\caption{Model Imitation for Model-Based Reinforcement Learning}
\begin{algorithmic}[1]\label{algo:mi}
\State Parameterize policy $\pi$, $T$, and WGAN critic $f$ with $\theta$, $\phi$, and $w$ respectively. Initialize an empty environment dataset $\mathcal{D}_\text{env}$
\State Initialize an empty queue $\mathcal{Q}_\pi$ of length $q$
\For {$i=0,1,2,...$}
\State Take actions in real environment according to $\pi_\theta$ and collect transition tuples $\mathcal{D}_\mathrm{i}$
\State Push $\pi_\theta$ into $\mathcal{Q}_\pi$
    \For{$N$ epochs}
    \For{$n_\text{transition}$ epochs}
        \State optimize Eq.~(\ref{eq:tloss}) and (\ref{eq:hinge-wgan-kai}) over $T_\phi$ and $f_w$ with $\{\mathcal{D}_j\}_{j=i-q+1}^i$ and $\mathcal{Q}_\pi$
    \EndFor
    \For{$n_\text{policy}$ epochs}
        \State update $\pi_\theta$ by TRPO on the data generated by $T_\phi$
    \EndFor
\EndFor
\EndFor
\end{algorithmic}
\end{algorithm}

After the above modification, the learned transition is modeled by a Gaussian distribution $T'(s'|s,a)=T_\phi(s'\vert s,a)\sim\mathcal{N}(\mu_\phi(s,a),\Sigma_\phi(s,a))$. 
This allows us to include both the stochastic and (approximately) deterministic scenarios. Although the underlying transitions of tasks like MuJoCo \citep{todorov2012mujoco} are deterministic, a Gaussian model enables the transition function to explore more during optimization. This improves the quality of the learned transition. Even through the transition is modeled as a stochastic function, when we optimize the policy, we use
the mean $\mu_\phi$ to estimate the next state. It is because learning from a stochastic function usually requires more samples and is more time-consuming. We may regard the learning from the mean of a Gaussian-parameterized transition model as the learning from an ensemble of the samples from a stochastic transition model.

Similar to previous work such as \citet{luo2018slbo}, the policy used to collect real transition tuples should possess a certain degree of exploration so as to derive a more generalized model. In \citet{luo2018slbo}, Ornstein-Uhlunbeck noise is added to the original policy to aid gathering a wider range of transition tuples. However, noise signals may greatly increase the complexity of occupancy measure matching.
Hence the scale of such noise needs to be carefully chosen using empirical methods. Instead of adding noise to enhance exploration, we enforce a lower bound $\sigma_\mathrm{min}$ of the standard deviation of the policy. This is found to be more stable for distribution matching.

Recall that the learned transition is trained using an MDP whose reward function is the critic of the truncated WGAN. To maximize the cumulative pseudo reward, we employ PPO \citep{schulman2017proximal}, which is an efficient approximation of TRPO \citep{schulman2015trust}. Although the PPO is originally designed for policy optimization, it can be adapted to transition learning with a fixed sampling policy and the PPO objective Eq.~(7) of \cite{schulman2017proximal}:
\begin{equation}
    \mathcal{L}_\text{PPO}(\phi)=\hat{\E}_t\Big[ \min(r_t(\phi)\hat{A}_t,~\text{clip}(r_t(\phi),1-\epsilon,1+\epsilon)\hat{A}_t) \Big],
    \label{eq:obj_ppo}
\end{equation}
where 
$$
r_t(\phi)=\frac{T_\phi(s_{t+1}|s_t,a_t)}{T_{\phi_{\text{old}}}(s_{t+1}|s_t,a_t)}
$$ 
and $\hat{A}_t$ is an advantage function derived from the pseudo reward $f(s_t,a_t,s_{t+1})$. Notice the pseudo reward is not truncated here because \citet{zhang2018self} suggests introducing the truncation only in the discriminator loss. That is, use $f$ in Eq.~(\ref{eq:obj_ppo}) and a hinge version of $f$ in Eq.~(\ref{eq:hinge-wgan-kai}).

To enhance stability, we add $\ell^2$ loss as a regularization term to the PPO objective. We empirically observe that this regularization yields better transition models, allowing the policy to improve more steadily. The overall loss of the transition learning becomes
\begin{align}\label{eq:tloss}
    \mathcal{L}_\text{transition}=-\mathcal{L}_\text{PPO}+\eta\mathcal{L}_{\ell^2},
\end{align}
where $\eta\geq 0$ and $\mathcal{L}_{\ell^2}$ is the $\ell^2$ loss that we use to measure the distance between  $\mu_\phi(s_t,a_t)$ and the ground truth $s_{t+1}$.  
It should be noted that the policy gradient used to optimize the transition function is not limited to PPO. Other policy gradient methods such as TD3 \cite{fujimoto2018td3} and TRPO \cite{schulman2015trust} can be directly applied without much modification.

\subsection{Training Across WGANs}
A standard model-based reinforcement learning method can be decomposed into the following steps: 
(1) At each generation $i$, sample real transition tuples $\mathcal{D}_i=\{s, a, s'\}$ with policy $\pi_i$. 
(2) Use the collected tuples to learn a transition model $T'$. 
(3) Use the trajectories synthesized by $T'$ to update the policy from $\pi_i$ to $\pi_{i+1}.$
These three steps are repeated until a desired policy is obtained. The procedure suggests that the learned transition $T'$ is refined by training across WGANs, which is similar to GAN adaptation~\cite{wang2018transferring,wu2018memory,wang2019minegan}, whose goal is to leverage the knowledge learned from one generation to another. As a result, we do not need to learn from scratch and the training can be more sample-efficient and time-saving.
Our method can be considered as a special case of GAN adaptation where the learned transition $T'$ is the generator and the optimal generators (the real transition) are the same over all generations.

\citet{wang2018transferring} reported that directly transferring network weights of the generator and the discriminator from one task to another enables the generator to produce images of higher quality in the target domain.
However, we empirically found that training across WGANs can still be unstable despite having the network weights initialized accordingly. We attribute this issue to the fact that the difference between $\rho_{T}^{\alpha,\pi_i}$ and $\rho_{T}^{\alpha,\pi_{i+1}}$ can be large. In the domain adaptation literature \cite{blitzer2006domain,daume2009frustratingly}, the source and the target domains is usually assumed to be close so that the correspondence can be possibly learned.
In order to ensure the desired proximity, we maintain a queue of length $q$ that collects policies $\pi_{i-q+1}, \pi_{i-q+2},\dots,\pi_{i}$.
The idea is that instead of directly matching occupancy measures $\rho_{T}^{\alpha,\pi_i}$ and $\rho_{T'}^{\alpha,\pi_i}$ by optimizing 
Eq.~(\ref{eq:hinge-wgan}), we match the occupancy measure mixtures obtained from the policies in the queue. Explicitly, the inner loop of Eq.~(\ref{eq:hinge-wgan}) 
is rewritten as
\begin{align}
\begin{split}
    \underset{\norm{f}_{\text{Lip}}\leq 1}{\min}~&
    \E_{\substack{(s,a)\sim \rho_T^{\alpha,\pi^{(q)}}\\s'\sim T(\cdot|s,a)}}[\max(0,\delta-f(s,a,s'))]+\\
    &\E_{\substack{(s,a)\sim \rho_{T'}^{\alpha,\pi^{(q)}}\\s'\sim T'(\cdot|s,a)}}[\max(0,\delta+f(s,a,s'))],
    \label{eq:hinge-wgan-kai}
  \end{split}
\end{align}
where $\rho_T^{\alpha,\pi^{(q)}}$ and $\rho_{T'}^{\alpha,\pi^{(q)}}$ are the mixed measures $\frac{1}{q}\sum_{j=0}^{q-1}\rho_{T}^{\alpha,\pi_{i-j}}$ and $\frac{1}{q}\sum_{j=0}^{q-1}\rho_{T'}^{\alpha,\pi_{i-j}}$, respectively.

Hence for a larger $q$, the source and the target domains become closer and training across WGANs is more likely to benefit from transferring network weights. Since we can save the empirical measures $\hat{\rho}_{T}^{\alpha,\pi_{i-j}}$ 
by preserving data $\mathcal{D}_{i-j}$, this method does not require additional samples from the real environment. Although it may take more computation to estimate $\rho_{T'}^{\alpha,\pi^{(q)}}$, we found that $q=2$ is sufficient to stabilize the WGAN transfer process.

Most MBRL methods have only one pair of model-policy update for each real environment sampling. We consider a training procedure similar to SLBO \citep{luo2018slbo}, which recognizes that the value function is dependent on the varying transition model. As a result, we have multiple update pairs for each real environment sampling. 
Our resulting \emph{model imitation (MI)} method is summarized in Algorithm~\ref{algo:mi}.

Note that unlike prior GAN works and its applications \cite{yu2017seqgan,mao2017least,karras2019style} which showed that the longer generative adversarial training produces a generator of higher quality, we empirically found that the occupancy measure matching is not required to converge in order to obtain a better transition model. 
By learning the process of fooling the discriminator, it is sufficient to prevent the transition model from overfitting the $\ell^2$ model loss and to learn to reproduce similar trajectories given the equivalent policy.

\section{Experiments}
In this section, we would like to answer the following questions about the proposed \emph{model imitation} algorithm.
(1) Does the algorithm outperform the state-of-the-art in terms of sample complexity and average return?
(2) Does the proposed algorithm benefit from distribution matching in the sense that it is superior to its model-free and model-based counterparts, TRPO and SLBO?

\begin{figure*}[]
    \centering
    \includegraphics[scale=0.31]{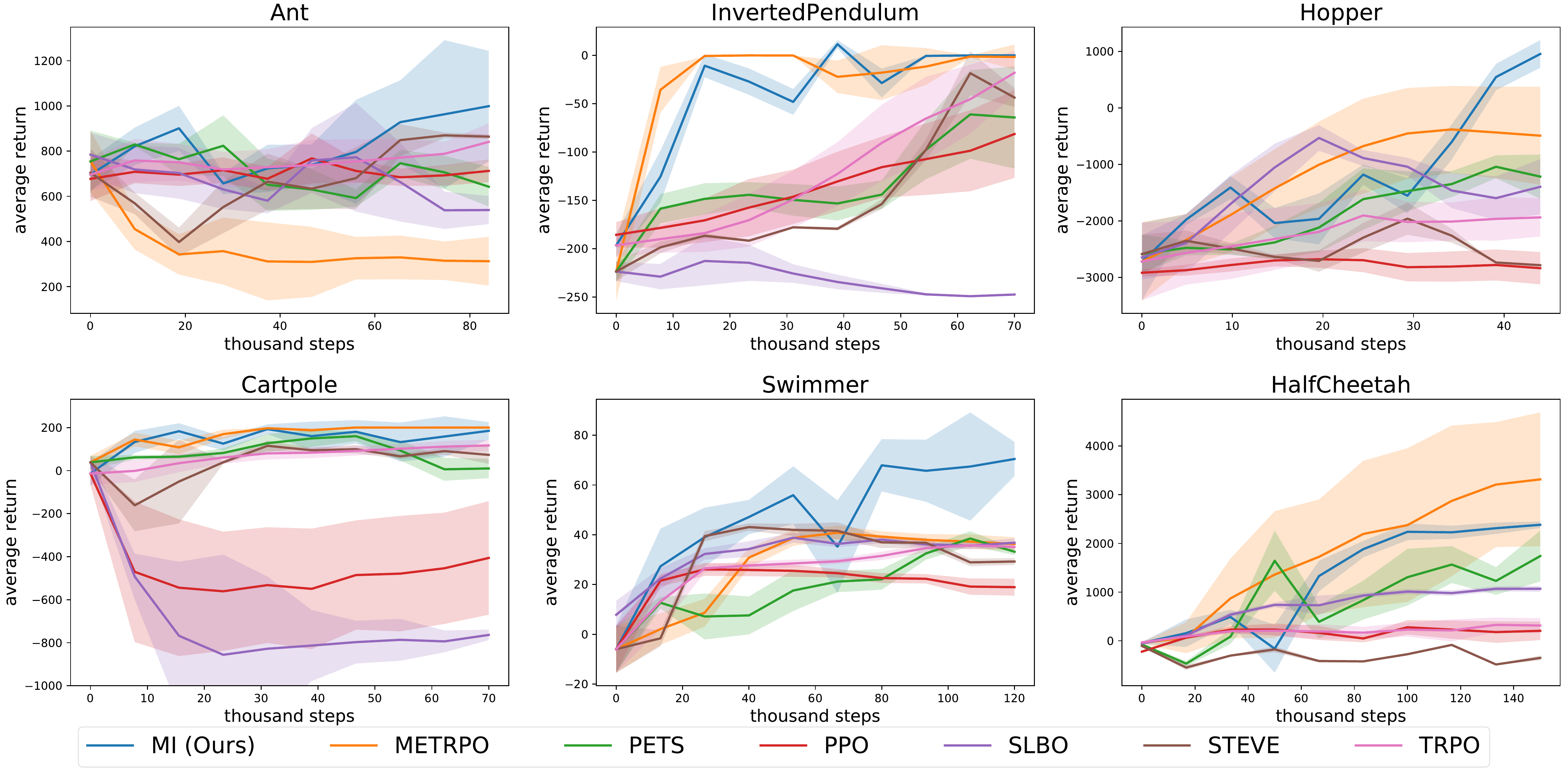}
    \caption{Learning curves of our MI versus two model-free and four model-based baselines. The solid lines indicate the mean of five trials and the shaded regions suggest standard deviation.}
    \label{fig:summary}
\end{figure*}

\begin{table*}[!tbh]
    \vspace{0.5cm}
    \centering
    \caption{Proportion of bench-marked RL methods that are inferior to MI in terms of $5\%$ t-test. $x/y$ indicates that among $y$ approaches, MI is significantly better than $x$ approaches. The detailed performance can be found in Table 1 of \citet{langlois2019benchmarking}. It should be noted that the reported results in \citet{langlois2019benchmarking} are the final performance after 200k time-steps whereas ours are no more than 140k time-steps.}
    \begin{tabular}{c|c|c|c|c|c|c}
    \hline\hline
         & Ant & InvertedPendulum & Hopper & Cartpole & Swimmer & HalfCheetah\\\hline
        MBRL & 8/10 & 8/10 & 8/10 & 8/10 & 9/10 & 8/10 \\
        MFRL & 4/4  & 4/4  & 4/4  & 4/4  & 4/4  & 2/4 \\\hline
    \end{tabular}
    \label{tab:comparison}
\end{table*}

To fairly compare algorithms and enhance reproducibility, we use open-sourced environments released along with a model-based benchmark paper \citep{langlois2019benchmarking}. This prior work is based on a physical simulation engine, MuJoCo \citep{todorov2012mujoco}. Specifically, we evaluate the proposed algorithm MI on six continuous control tasks. Due to the space limit, additional two experiments on Reacher and Pendulum are reported in Appendix~\ref{apdx:extra}. 
For hyper-parameters of Algorithm~\ref{algo:mi} and coefficients such as entropy regularization $\lambda$, please refer to Appendix~\ref{apdx:hyper}.

To assess the benefit of utilizing the proposed MI algorithm, we compare its performance with two model-free algorithms, TRPO \citep{schulman2015trust} and PPO \citep{schulman2017proximal}, and four model-based methods: 
SLBO \citep{luo2018slbo} gave theoretical guarantees of monotonic improvement for model-based deep RL and proposed to update a joint model-policy objective.
PETS \citep{kurtland2018pets} proposed to employ uncertainty-aware dynamic models with sampling-based uncertainty to capture both aleatoric and epistemic uncertainty. 
METRPO \citep{kurutach18metrpo} showed that insufficient data may cause instability and proposed to use an ensemble of models to regularize the learning process. 
STEVE \citep{jacob2018steve} dynamically interpolates among model rollouts of various horizon lengths and favors those with estimates of lower error.

Figure~\ref{fig:summary} shows the learning curves for all methods. In environments other than Ant, MI converges fairly fast and learns a policy that competes or outperforms competitors'. 
In Ant, even though MI does not improve the performance too much from the initial one, the fact that it slightly improve the average return to around 1,000 indicates that MI can capture a better transition than other methods do. 

The questions raised at the beginning of this section can now be answered. The learned model enables TRPO to explore and learn without directly accessing real transitions. As a result, TRPO equipped with MI needs significantly fewer interactions with the real environment to learn a good policy. Even though MI is based on the training framework proposed in SLBO, the additional distribution matching component allows the learned model to generate similar rollouts to that of the real environment. Empirically, this gives superior performance because we rely on long rollouts to estimate the policy gradient.

To better understand the performance presented in Figure~\ref{fig:summary}, we further compare MI with bench-marked RL algorithms recorded in \cite{langlois2019benchmarking}. This includes state-of-the-art MFRL methods such as TD3 \citep{fujimoto2018td3} and SAC \citep{haarnoja2018soft}. The reported results of \cite{langlois2019benchmarking} are the final performance after 200k time-steps, but we only use up to 140k time-steps to train MI. Table~\ref{tab:comparison} indicates that MI significantly outperforms most of the MBRL and MFRL methods with $50\%$ fewer samples. This verifies that by incorporating distribution matching, MI is more sample-efficient.  

\section{Conclusion}
We have pointed out that the state-of-the-art methods concentrate on learning models in a supervised fashion. 
However, this does not guarantee that under the same policy,  
the model is able to reproduce realistic trajectories. In particular, the model may not be accurate enough to synthesize
long rollouts that accurately reflect real rollouts. To address this, we have proposed to use a WGAN to achieve occupancy measure matching between the real 
and the learned transition models. We theoretically show that such matching indicates the closeness in cumulative rewards between the model and the real environment under the same policy.

To enable stable training across WGANs, we suggest a truncated version of WGAN to prevent training from getting stuck at local optima. We have further discovered that in Model Imitation, directly transferring network weights from previous GAN training to the next can be unstable. We attribute this issue to the dissimilarity in occupancy measures, which hampered us from borrowing the concept of GAN adaptation and benefitting from sharing weights. To alleviate this issue, and utilize the knowledge learned from previous models, we proposed matching a mixture of occupancy measures from a collection of policies instead of an occupancy measure from a single policy. Consequently, the discrepancy between two densities can be adjusted and the adaptation by transferring weights is applicable.

The success of GANs in imitation learning motivated us to hypothesize its superiority over supervised learning when learning transition models. We have confirmed it by showing that our MI algorithm possesses sound theoretical properties and is able to converge much faster empirically, obtaining better policies than state-of-the-art model-based and model-free algorithms.

\section*{Acknowledgement}
The authors would like to acknowledge the National Science Founding for the grant RI-1764078 and Qualcomm for the generous support.

\bibliography{reference}
\bibliographystyle{plainnat}


\newpage
\appendix
\allowdisplaybreaks

\section{Proofs}\label{apdx:proofs}

\begin{lemma}
	Let $\rho_T(s,a)$ be a the normalized occupancy measure generated by the triple $(\alpha,\pi,T)$ with discount factor $\gamma$. Let $\rho_T^\beta(s,a)$ be the normalized occupancy measure generated by the triple $(\rho_T,\pi,T)$ with discount factor $\beta$. If $\gamma>\beta$, then $D_{TV}(\rho_T\lvert\rvert \rho_T^\beta)\leq (1-\gamma)\beta/(\gamma-\beta)$.
	\label{lemma:short_occu_bound}
\end{lemma}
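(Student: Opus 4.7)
The plan is to realize both $\rho_T$ and $\rho_T^\beta$ as mixtures of the same time-marginals, and then reduce the TV bound to one between two scalar weight distributions on $\mathbb{Z}_{\geq 0}$. Let $\mu_t(s,a):=\mathbb{P}(s_t=s,a_t=a\mid \alpha,\pi,T)$ denote the marginal at time $t$ of the real process started from $\alpha$. By the definition of the occupancy measure, $\rho_T=\sum_{t=0}^\infty(1-\gamma)\gamma^t\mu_t$, so the mixing weights form the $\mathrm{Geom}(1-\gamma)$ pmf, which I will call $p$.

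For $\rho_T^\beta$, first sample $(s_0,a_0)\sim\rho_T$ and then run $(\pi,T)$ for a $\mathrm{Geom}(1-\beta)$ number of steps. Unrolling the nested sampling, a draw $(s,a)\sim\rho_T^\beta$ is exactly the pair $(s_{H_0+L},a_{H_0+L})$ in the underlying real trajectory started from $\alpha$, where $H_0\sim\mathrm{Geom}(1-\gamma)$ and $L\sim\mathrm{Geom}(1-\beta)$ are independent. Hence $\rho_T^\beta=\sum_n q(n)\mu_n$ with $q(n)=\mathbb{P}(H_0+L=n)$, and a short convolution plus geometric-series sum gives
\[
q(n)=\frac{(1-\gamma)(1-\beta)}{\gamma-\beta}\bigl(\gamma^{n+1}-\beta^{n+1}\bigr).
\]

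Because $\rho_T$ and $\rho_T^\beta$ are pushforwards of the scalar distributions $p$ and $q$ on $\mathbb{Z}_{\geq 0}$ through the \emph{same} kernel $t\mapsto \mu_t$, the data-processing inequality for total variation gives $D_{TV}(\rho_T\,\|\,\rho_T^\beta)\leq D_{TV}(p,q)$. Next I would factor $p(n)-q(n)$ over the common denominator $\gamma-\beta$ to obtain
\[
p(n)-q(n)=\frac{\beta(1-\gamma)}{\gamma-\beta}\bigl[\beta^n(1-\beta)-\gamma^n(1-\gamma)\bigr],
\]
so summing absolute values yields
\[
D_{TV}(p,q)=\frac{\beta(1-\gamma)}{\gamma-\beta}\,D_{TV}\!\bigl(\mathrm{Geom}(1-\beta),\mathrm{Geom}(1-\gamma)\bigr)\leq \frac{\beta(1-\gamma)}{\gamma-\beta},
\]
where the last step merely uses that the TV between any two probability measures is at most one. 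Chaining this with the data-processing bound closes the argument.

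The only step that requires genuine care is the reformulation of $\rho_T^\beta$ as a mixture over the same $\mu_n$ that defines $\rho_T$: this depends on unrolling the "initial distribution equals $\rho_T$" into its representation as a random stopping time on the underlying $\alpha$-trajectory, so that the sampling time becomes the sum $H_0+L$. After that, the proof is entirely routine algebra—convolve two geometric pmfs, massage $p-q$ into the factored form above, and bound the leftover TV by $1$.
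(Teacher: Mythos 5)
Your proof is correct and follows essentially the same route as the paper's: both express $\rho_T$ and $\rho_T^\beta$ as mixtures of the same time-marginals $\mu_t$, reduce the bound to the $\ell_1$ distance between the $\mathrm{Geom}(1-\gamma)$ weights and their convolution with $\mathrm{Geom}(1-\beta)$, and arrive at the identical factorization $p(n)-q(n)=\frac{\beta(1-\gamma)}{\gamma-\beta}\bigl[(1-\beta)\beta^n-(1-\gamma)\gamma^n\bigr]$. Your endgame is marginally cleaner — recognizing the residual sum as $D_{TV}(\mathrm{Geom}(1-\beta),\mathrm{Geom}(1-\gamma))\leq 1$ rather than locating the sign-flip index $M$ and computing $\gamma^M-\beta^M$ explicitly as the paper does — but this is the same bound in substance.
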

\begin{proof}
	By definition of the occupancy measure we have
	\begin{equation*}
	\begin{split}
	&\rho_T(s,a)=\sum\limits_{i=0}^\infty (1-\gamma)\gamma^i f_i(s,a).\\
	&\rho_T^\beta(s,a)=\sum\limits_{i=0}^\infty \sum\limits_{j=0}^i (1-\gamma)\gamma^{i-j}(1-\beta)\beta^j f_i(s,a),
	\end{split}
	\end{equation*}
	where $f_i(s,a)$ is the density of $(s,a)$ at time $i$ if generated by the triple $(\alpha,\pi,T)$. The TV distance is bounded by
	\begin{equation*}
	\begin{split}
	D_{TV}(\rho_T\lvert\rvert \rho_T^\beta )&\leq \frac{1}{2}\sum\limits_{i=0}^\infty\Big|(1-\gamma)\gamma^i - \sum\limits_{j=0}^i(1-\gamma)\gamma^{i-j}(1-\beta)\beta^j \Big|=\frac{1}{2}\sum\limits_{i=0}^\infty (1-\gamma)\gamma^i\Big| 1-\sum\limits_{j=0}^i (1-\beta)\Big(\frac{\beta}{\gamma}\Big)^j \Big|\\
	&=\frac{1}{2}\sum\limits_{i=0}^\infty (1-\gamma)\gamma^i\frac{1}{\gamma-\beta}\Big|-\beta(1-\gamma) + \Big(\frac{\beta}{\gamma}\Big)^{i+1}(1-\beta)\gamma  \Big|\\
	&\overset{(*)}{=} \frac{(1-\gamma)\beta}{\gamma-\beta}\sum\limits_{i=0}^{M-1} -(1-\gamma)\gamma^i + (1-\beta)\beta^i=\frac{(1-\gamma)\beta}{\gamma-\beta}(\gamma^{M}-\beta^{M})\\
	&\leq \frac{(1-\gamma)\beta}{\gamma-\beta}.
	\end{split}
	\end{equation*}
	where $(*)$ comes from that $-\beta(1-\gamma)+(\frac{\beta}{\gamma})^i(1-\beta)\gamma$ is a strictly decreasing function in $i$. Since $\gamma>\beta$, its sign flips from $+$ to $-$ at some index; say $M$. Finally, the sum of the absolute value are the same from $\sum_{i=0}^{M-1}$ and from $\sum_{i=M}^\infty$ because the total probability is conservative, and the difference on one side is the same as that on the other.
\end{proof}

\section{Experiments}
\subsection{Implementation Details}\label{apdx:details}
We normalized the collected states to have zero mean and unit standard deviation.
The mean $\mu_\mathrm{n}$ and the standard deviation $\sigma_\mathrm{n}$ were updated after real transition tuples $(s_t, a_t, s_{t+1})$ are collected  each time. Instead of directly predicting the next state, we estimated the state difference $s_{t+1}-s_t$ \citep{kurutach18metrpo,luo2018slbo}. Since we incorporated state normalization, the transition network was trained to predict $(s_{t+1}-s_t)/\sigma_n$.

In order to enforce the Lipschitz constraint to the WGAN critic $f$, we employed gradient penalty \citep{gulrajani2017improved} with its weight $\lambda=10$.
As prior studies \cite{kurach2019large,lucic2018gans} found, the adversarial training can be largely benefit from spectral normalization \citep{miyato2018spectral} in our experiments. We incorporated spectral normalization to derive transition functions that showed better stability when they were leveraged to collect samples to update the agent policy.

As mentioned in Section~\ref{sec:empirical}, we clipped the standard deviation of the agent policy from below to maintain a certain degree of exploration, which was found to be beneficial to the transition learning. In addition, to prevent the agent policy from overfitting the transition by increasing standard deviation too much, we also clipped the value from above with $\sigma_\mathrm{max}$. We fixed the values with $\sigma_\mathrm{max}=0.3$ and $\sigma_\mathrm{min}=0.1$ across all experiments.

\subsection{Additional Experiments}\label{apdx:extra}
\begin{figure*}[!ht]
    \centering
    \includegraphics[scale=0.3]{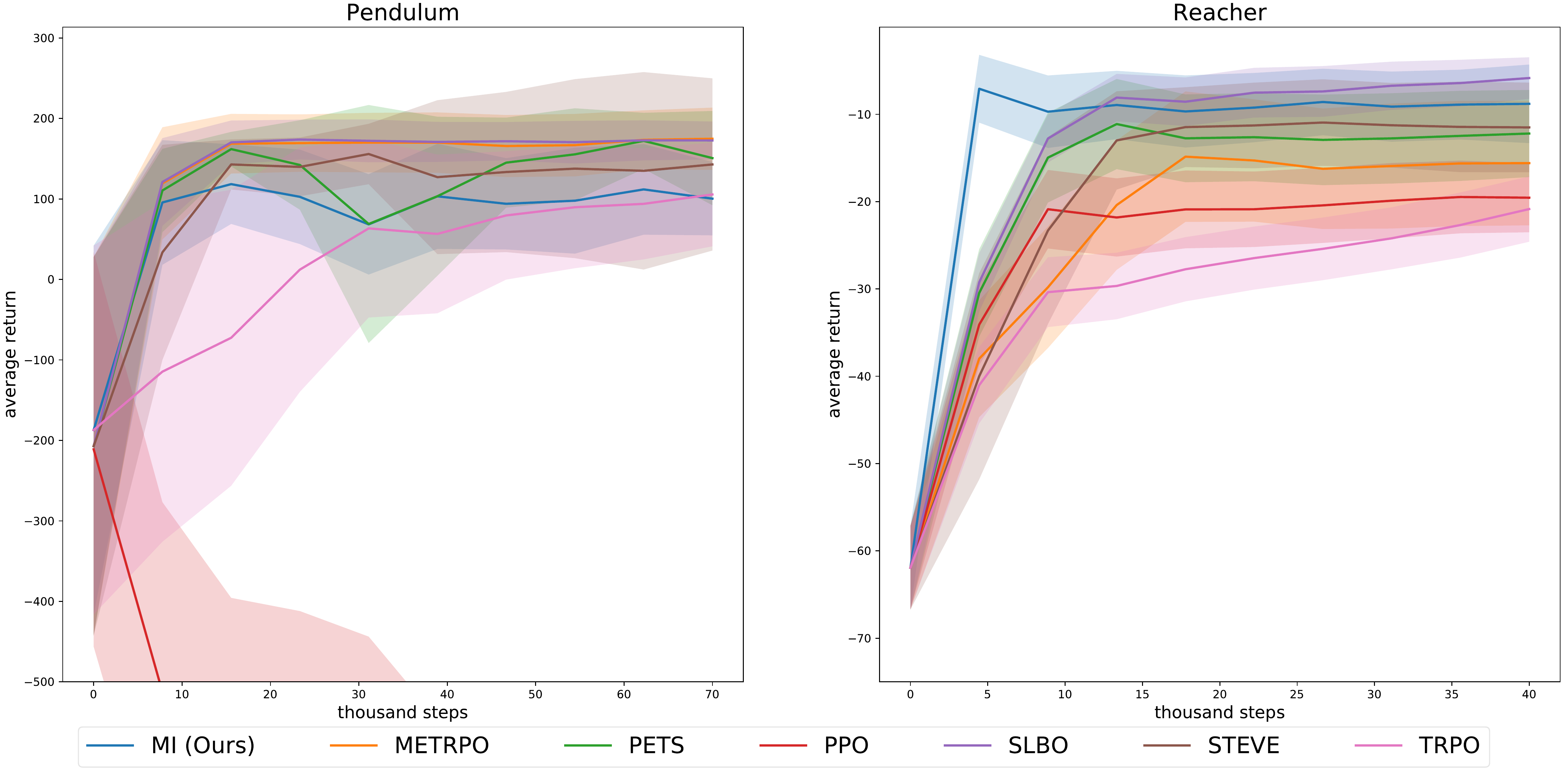}
    \caption{Learning curves of our MI versus two model-free and four model-based baselines. The solid lines indicate the mean of five trials and the shaded regions suggest standard deviation.}
    \label{fig:more_exp}
\end{figure*}

\subsection{Hyperparameters}\label{apdx:hyper}
\begin{table}[!ht]
    \centering
    \resizebox{\columnwidth}{!}{%
    \begin{tabular}{|c|c|c|c|c|c|c|c|c|}
        \hline
         & Ant & InvertedPendulum & Hopper & Cartpole & Swimmer & HalfCheetah & Pendulum & Reacher
        \\\hline
        $N$ & 5 & 1 & 10 & 1 & 2 & 2 & 5 & 1 \\\hline
        $\eta$ & 15 & 1 & 10 & 5 & 10 & 10 & 15 & 5\\\hline
        $n_\text{transition}$ & 200 & 100 & 100 & 100 & 50 & 200 & 100 & 100\\\hline
        $n_\text{policy}$ & 20 & 50 & 10 & 50 & 20 & 20 & 10 & 50\\\hline
        horizon for 
        model update & 20 & 15 & 10 & 15 & 15 & 20 & 20 & 15\\\hline
        entropy
        regularization & $10^{-5}$ & $10^{-5}$ & $10^{-3}$ & $10^{-5}$ & $10^{-7}$ & $10^{-7}$ & $10^{-5}$ & $10^{-5}$\\\hline
    \end{tabular}%
    }
    \caption{List of hyper-parameters adopted in our experiments.}
    \label{tab:hyper}
\end{table}

\end{document}